\newtheorem{theorem}{Theorem}
\newtheorem{lemma}{Lemma}
\newtheorem{assumption}{Assumption}
\newtheorem{remark}{Remark}
\newtheorem{corollary}{Corollay}
\title{\Large  \textbf{A Tractable Algorithm For Finite-Horizon Continuous Reinforcement Learning}}
\begin{document}
\author{\IEEEauthorblockN{Phanideep Gampa}
\IEEEauthorblockA{\textit{Mathematical Science} \\
\textit{Indian Institute of Technology (BHU),}\\
Varanasi, India 221005 \\
gampa.phanideep.mat15@iitbhu.ac.in}
\and
\IEEEauthorblockN{Sairam Satwik Kondamudi}
\IEEEauthorblockA{\textit{Computer Science and Engineering} \\
\textit{Indian Institute of Technology,}\\
Hyderabad, India 502285 \\
satwik7545@gmail.com}
\and
\and
\and
\IEEEauthorblockN{K. Lakshmanan}
\IEEEauthorblockA{\textit{Computer Science and Engineering} \\
\textit{Indian Institute of Technology (BHU),}\\
Varanasi, India 221005 \\
lakshmanank.cse@iitbhu.ac.in}
}

\maketitle

\begin{abstract}
We consider the finite horizon continuous reinforcement learning problem. Our contribution is three-fold.
First,we give a tractable algorithm based on optimistic value iteration for the problem.
Next,we give a lower bound on regret of order $\Omega(T^{2/3})$ for any algorithm discretizes the state space,
improving the previous regret bound of $\Omega(T^{1/2})$ of Ortner and Ryabko \cite{contrl} for the same problem. Next,under the assumption that the rewards and transitions are H\"{o}lder Continuous 
we show that the upper bound on the discretization error is $const.Ln^{-\alpha}T$. 
Finally,we give some simple experiments to validate our propositions.


\end{abstract}

%
%

%
%

\begin{IEEEkeywords}
Reinforcement Learning,Markov Decision Process(MDP),Regret,Continuous State Space,Bonus,Finite Horizon.
\end{IEEEkeywords}
\section{Introduction}
In most reinforcement learning algorithms the size of the state and action spaces are assumed to be finite. But many real-world applications have
continuous state or action spaces. Though the case of continuous state space has been considered before like in 
Ortner and Ryabko \cite{contrl} and Lakshmanan et al. \cite{lakshmanan} there is no tractable algorithm whose regret has been analyzed.In this work we consider a simpler finite-horizon setting and give a tractable algorithm with near-optimal regret bound.

There have been many assumptions on the rewards and transition function considered before like deterministic transitions
in \cite{bernstein} or transition functions that are linear in state and action \cite{strehl}, \cite{brunskill}, \cite{abbasi} and \cite{ibrahmi}.
Kakade et al. \cite{kakade} have considered more general setting of PAC learning in
RL with metric state spaces. Another result is by Osband and Roy \cite{osband} where bounds on expected regret was derived when reward and transition functions
belonged to a class of functions characterized by parameters like eluder dimension and Kolmogorov dimension.
In this paper we consider the most general assumption that the reward function and the transition functions are H\"{o}lder continuous like the papers \cite{contrl}
and \cite{lakshmanan}. We derive our bound on the discretization error without any further assumptions. By assuming unbiased sampling, we also derive an upper bound on the regret which is $O(T^{2/3})$  when both transition probabilities and reward function are Lipschitz continuous. 

In recent work, Azar et al. \cite{azar2017minimax} have proposed a new algorithm Optimistic Value Iteration (UCBVI).
This is different from the UCRL algorithm
in Jacksh et al \cite{ucrl} upon which the algorithms of Ortner and Ryabko \cite{contrl} was based.
For episodic problems, it is known that UCBVI algorithm has better regret bounds. We extend this UCBVI algorithm for continuous state space
problems using the state space aggregation used in \cite{contrl}.
It should also be noted that both the algorithms in \cite{contrl} and \cite{lakshmanan} are not tractable. Though we have considered an easier
episodic setting, our algorithm is tractable. And we show that the discretization error(discretizing the state space to intervals) is bounded above by $const.Ln^{-\alpha}T$ under the assumptions that rewards and transitions are H\"{o}lder Continuous.

For infinite horizon setting, in the one-dimensional case with Lipschitz rewards and transition functions
the bound by Ortner and Ryabko is of order $\tilde{O}(T^{3/4})$ in
\cite{contrl}. If the transition functions are in addition smooth, a regret bound of $O(T^{2/3})$ is shown in \cite{lakshmanan}.

The best known lower bound for Lipschitz reward and general transition function is by Ortner and Ryabko \cite{contrl} which is order $\Omega(T^{1/2})$.
We improve upon this by giving a lower bound of $\Omega(T^{2/3})$, matching the upper bound for algorithms that discretizes the state-space.
We have also implemented the algorithm and give experimental results on one and two dimensional setting. The empirical performance is seen to match with
the theoretical results.

\section{Terminology and Problem Definition}
We define Markov Decision Process (MDP) by  state space $S$, action space $A$, reward function $r:S\times A \rightarrow [0,1]$ and
transition probability $P:S\times A \times S \rightarrow [0,1]$.
We consider continuous state space $[0,1]^d$ and finite action MDP. For simplicity we derive results for the one dimensional state space, they can
be easily generalized to higher dimensional state spaces.The random rewards given  state $s$ and action $a$ are bounded in $[0,1]$ with mean $r(s,a)$.
The probability of going to state $s'$ given state $s$ and action $a$ is given by the transition probability $p(s'|s,a)$.
We make the following assumptions like in \cite{contrl}, \cite{lakshmanan}.
This guarantees that rewards and transitions are close in close states.

\begin{assumption} \label{ass:1}
There are $L,\, \alpha > 0$ such that for any two states $s, \, s'$ and all actions a,
\begin{equation}
\big|r(s,a) - r(s',a)\big| \leq L|s-s'|^\alpha.
\end{equation}
\end{assumption}

\begin{assumption}\label{ass:2}
There are $L,\, \alpha > 0$ such that for any two states $s, \, s'$ and all actions a,
\begin{equation}
 \big\| p(\cdot | s,a) - p(\cdot | s',a) \big\|_1 \leq L|s-s'|^\alpha.
\end{equation}
\end{assumption}

For simplicity we assume $L, \, \alpha$ are same for both assumptions.

We consider the finite horizon setting where the agent interacts with the environment in $H$ number of steps per episode.
We denote by $[n]$ the set $\{i\in \mathbb{N}, 1\leq i \leq n\}$).
The policy during an episode is expressed as a mapping $\pi:S\times[H]\to A $.
Let $x_{k,h}$ denote the state in the step $h$ of the episode $k$ and $\pi_k$ denote the policy for the episode $k$.
The value function of each state in $k^{th}$ episode from step $h$ by following a policy $\pi_k$ is defined by
\begin{equation}\label{eq:value}
V^{\pi_{k}}_{h}(s)=\mathds{E}_{\pi_k}\bigg(\sum_{i=h}^{H}r(x_{k,i},\pi_k(x_{k,i},i))|x_{k,h}=s\bigg).
\end{equation}
The \textit{optimal} value function is defined as  $V_{h}^{*}(x)= \sup\limits_{\pi} V_{h}^{\pi}(x)  $ for all $x \in S$ and $h \geq 1$.
The performance of the algorithm is measured according to regret incurred in  all the episodes as given by
\begin{equation}
\label{eq:regret}
\mbox{Regret}(K)=\sum_{k=1}^{K} \big(V_{1}^{*}(x_{k,1})-V_{1}^{\pi_{k}}(x_{k,1})\big).
\end{equation}

We discretize the continuous state space into intervals of length $1/n$ like in \cite{contrl}.
Let $I(s)$ denote interval of the state $s\in S$.
We define \textit{aggregate rewards} and \textit{aggregate transition probabilities} with respect to $\{I_1,I_2,\ldots,I_n\}$ as
\begin{equation}
\label{eq:agg prob}
p^{agg}(I_j|s,a)=\int_{I_j}p(ds'|s,a)\\
\end{equation}
\begin{equation}
\label{eq:agg rew}
r^{agg}(I_j,a)	=  n.\int_{I_j}r(s,a)ds.
\end{equation}

Here $r^{agg}(I_j,a)$ can be interpreted as the mean reward in the interval $I_j$. The algorithm treats each interval as single state and the
aggregate policy is $\pi_{k}^{agg}(I(s),i)$.
The aggregated value function following this policy is $V^{agg}_{\pi_{k},h}(I_{j})$.At any state $s \in S$ and and step $i\in [H]$,the mapping between the policies are given by
\begin{equation}
\label{eq:policy}
\pi_{k}(s,i)=\pi_{k}^{agg}(I(s),i)
\end{equation}
\begin{equation}\label{eq:agg value}
V^{agg}_{\pi_{k,h}}(I)=\mathds{E}_{\pi^{agg}_k}\bigg(\sum_{i=h}^{H}r^{agg}(I_{k,i},\pi^{agg}_k(I_{k,i},i))|I_{k,h}=I\bigg).
\end{equation}
For the discretised MDP, the transitions are between intervals,so we define $P^{agg}(I|I',a)$ as the average of $p^{agg}(I|t,a)$ in $I'$
\begin{equation}
\label{eq:int prob}
P^{agg}(I|I',a)= n.\int_{I'}p^{agg}(I|t,a)dt
\end{equation}

\section{Algorithm}
In UCRL \cite{ucrl} based algorithms confidence sets are built around rewards and transition probabilities.
But here we build the confidence set around optimal value function $V^{*}_{h}$ of the discretized MDP as in Azar et al (\cite{azar2017minimax}).
The algorithm proceeds similar to UCBVI in \cite{azar2017minimax}.
The only difference is that here we use aggregated rewards, transitions probabilities and value functions.
The bonus (Algorithm \ref{algo:bonus}) which is used in calculating the 
Q-Values is built from the empirical variance of the estimated next values. This relies on the Bernstein-Freedman's concentration inequality for
building the concentration sets.

In UCRL based algorithms \cite{ucrl} we need to find the optimistic MDP and optimal policy for that MDP. This step is not tractable when the state
space is continuous as we need knowledge of the bias span $H$ (section 3 of \cite{contrl}). 
Though this is not needed for finite-horizon problems, to the best of our knowledge, UCCRL algorithm is not computationally tractable.
We note that in our algorithm which is based on UCBVI, this step is not needed. Briefly the algorithm consists of the following parts/sections.
\begin{enumerate}
    \item Initialization part which also includes the discretizing the continuous MDP based on the input parameters. This step is executed only once.
    \item The next part consists of an iterative flow of three processes.(The number of iterations is equal to the number of episodes $K$)
    \begin{itemize}
        \item Estimating the transition probabilities based on the history till that iterate. 
        \item Finding the Q values using a modified bellman operator(which includes bonus) according to the current transition probabilities.This part can be considered as a simple Dynamic Programming(DP) problem for finding Q values.
        \item Execute the current(discrete) policy(according to the Q values found) and record the feedback given by the environment.
    \end{itemize}
\end{enumerate}

\subsection{Bonus}
This algorithm follows the heuristic principle known as \textit{optimism in the face of uncertainty}. The algorithms in this paradigm employ optimism to guide exploration. In simple words, the algorithm assigns higher bonus value to that action(given a state/interval) which it is most uncertain about. This encourages the exploration of that action next time when it visits the same state again. Indirectly as the number of times the action chosen increases the algorithm would become more and more certain about it. Given a (state/interval,action) pair, the bonus in the algorithm depends on  the variance of value function of the next possible states. Larger the variance,larger the uncertainty of taking that action. \\
\\
Let us understand this using a simple example. Consider a discretized MDP with 4 possible actions. Given a particular interval $I_k$, let the Q values and the corresponding bonus found by the algorithm be equal to those given in the figure \ref{fig:bonus}. In the case of not using bonus, the best action would be $a1$ because $(I_k,a1)$ has the highest Q-value in $\{(I_k,a_i)| i=1,2,3,4\}$. But when bonus is considered, the action $a4$ is the most likely to be chosen. According to the bonus, the decreasing order of uncertainties of actions are $a4,a2,a1,a3$. The combined effect of Q value and the bonus will lead to the exploration of action $a4$. Due to this exploration of action $a4$, the uncertainty about choosing $a4$ reduces.   

\begin{figure}
    \centering
    \includegraphics[width=3.5in,height=2.3in]{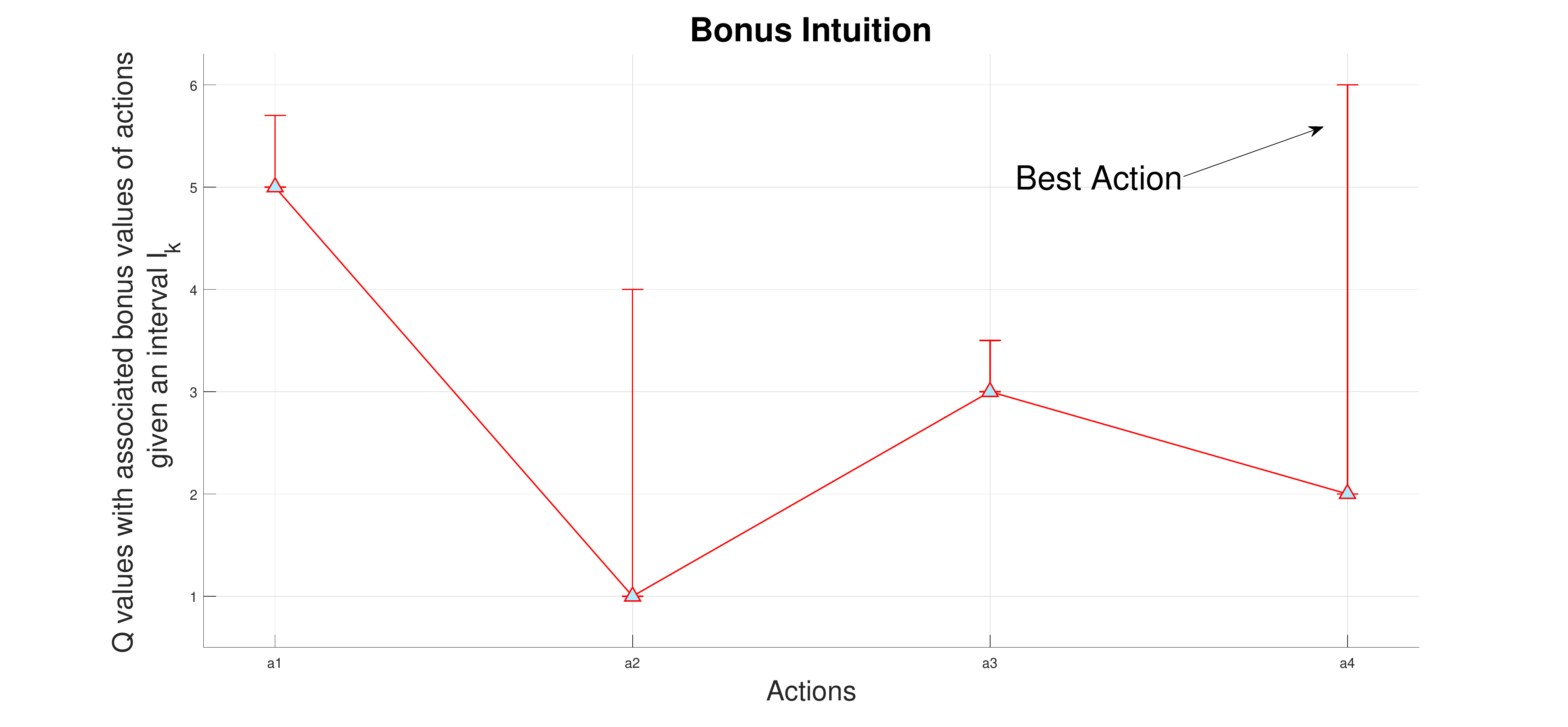}      \caption{Q values and bonus of four actions when an interval is fixed for an example MDP}
    \label{fig:bonus}
\end{figure}
\begin{algorithm}
\caption{UCBVI-CRL}\label{algo:ucbvicrl}
\begin{algorithmic}[1]
\Statex \textbf{Input:} State space $[0,1]$, an action space $A$, discretization parameter $n\in N$,Horizon length $H$, H\"{o}lder parameters $L,\alpha$,
confidence parameter $\delta$, reward function $r$ of continuous MDP.
\Statex \textbf{Initialization:} Let $I_{1} =\left[0,\frac{1}{n}\right] $ and $I_{j} =\left(\frac{j-1}{n},\frac{j}{n}\right]$ for $j=2,3,4 ,\ldots,n$.
Set $t=1$, $\hat{H}=\phi$ and observe the initial state $s_1$ and interval $I(s_1)$. 
Set $S^{*}=\{I_1, I_2,\ldots,I_n\}$. For all $x\in S^*$, $N_k(x,a)$ is number of times action $a$ has been chosen in a state $\in x$.
Similarly with $N_k(x,a,y),N'_{k,h}(x,a)$ where $x,y \in S^*$.
\For {$k=1,2,\ldots$}
\Statex \textbf{Initialize Episode k:}
\Statex  Compute for all $(x,a,y)\in S^{*}\times A\times S^{*}$,
\Statex  $N_k(x,a,y)=\sum_{(x',a',y')\in \hat{H} }\mathbb{I}(x'=x,a'=a,y'=y)$
\Statex  $N_k(x,a)=\sum_{y\in S^{*}}N_k(x,a,y)$
\Statex  $N'_{k,h}(x,a)=\sum_{(x_{i,h},a_{i,h},x_{i,h+1})\in \hat{H}}\mathbb{I}(x_{i,h}=x,a_{i,h}=a)$
\Statex  Let $\kappa=\{(x,a) \in S^{*} \times A, \, N_{k}(x,a)>0\}$
\Statex  Estimate $\hat{P}^{agg}_{k}(y|x,a)=\frac{N_{k}(x,a,y)}{N_{k}(x,a)}, \, \forall \, (x,a) \in \kappa$, using all samples from states interval $y$.
\Statex  Initialize $V^{agg}_{k,H+1}(x)=0$ for all $x\in S^{*}$
\Statex\textbf{Calculating $Q$-values:}
\For{$h=H,H-1,\ldots1$}
\For{$(x,a)\in S^* \times A$}
\If{$(x,a)\in \kappa$}
\Statex \hspace{\algorithmicindent}
$b_{k,h}(x,a)=$bonus$\Big(\hat{P}_k^{agg},V^{agg}_{k,h+1},N_k,N'_{k,h}\Big)$
\Statex \hspace{\algorithmicindent} 
$Q^{agg}_{k,h}(x,a)=\min(Q^{agg}_{k-1,h}(x,a),H,r^{agg}(x,a)+(\hat{P}^{agg}_{k}V^{agg}_{k,h+1})(x,a)+b_{k,h}(x,a))$
\Else\Statex \hspace{\algorithmicindent} \hspace{\algorithmicindent} \hspace{\algorithmicindent} {$Q^{agg}_{k,h}(x,a)=H$}
\EndIf
\Statex \hspace{\algorithmicindent} \hspace{\algorithmicindent} $V_{k,h}^{agg}(x)=\max_{a\in A}\big(Q^{agg}_{k,h}(x,a)\big)$
\EndFor
\EndFor
\Statex \textbf{Executing the policy:}
\For{$h=1,2\ldots H$}
\Statex \hspace{\algorithmicindent} Take action $a_{k,h}=\arg\max_{a}\big(Q^{agg}_{k,h}(x_{k,h},a)\big)$
\Statex \hspace{\algorithmicindent} Update $\hat{H}=\hat{H}\cup(x_{k,h},a_{k,h},x_{k,h+1})$
\EndFor
\EndFor
\end{algorithmic}
\end{algorithm}

\begin{algorithm}
\caption{Bonus}\label{algo:bonus}
\begin{algorithmic}[2]
\Statex \nonumber \textbf{Input:}  $\hat{P}^{agg}_{k}(x,a),V^{agg}_{k,h+1},N_k,N'_{k,h}$, confidence parameter $\delta$.
\Statex \begin{align}
\nonumber \mbox{Let } &b(x,a) = \sqrt{\frac{8LVar_{Y\sim \hat{P}^{agg}_{k}(.|x,a) }(V^{agg}_{k,h+1}(Y))}{N_{k}(x,a)}}\\ &+\frac{14HL}{3N_{k}(x,a)}\\
&+\sqrt{\frac{8\sum_{y}\hat{P}^{agg}_{k}(y|x,a)[\min(\frac{100H^3S^2AL^2}{N'_{k,h+1}(y)},H^2)] }{N_{k}(x,a)}} \label{eq:bonus}
\end{align}\\
where $L=\ln(\frac{5nAT}{\delta})$
\Statex \textbf{return b}
\end{algorithmic}

\end{algorithm}
\section{Results}
The regret analysis of UCBVI-CRL is straight forward.
The regret of the continuous MDP upto K episodes is defined in equation \eqref{eq:regret}.
Now add and subtract terms $V^{agg^*}_{1}(I_{k,1})$, $V^{agg}_{\pi_{k,1}}(I_{j})$ to this to get,
\begin{align*}
\mbox{Regret} (K)=\sum_{k=1}^{K}& \big([V_{1}^{*}(x_{k,1})- V^{agg^*}_{1}(I_{k,1})\\
&+
V^{agg}_{\pi_{k,1}}(I_{k,1})-V_{1}^{\pi_{k}}(x_{k,1})] \\
&+\underbrace{[V^{agg^*}_{1}(I_{k,1})-V^{agg}_{\pi_{k,1}}(I_{k,1})]}_{\Delta_{agg}}\big)\\
\Delta_{error}=\sum_{k=1}^{K}& \big([V_{1}^{*}(x_{k,1})- V^{agg^*}_{1}(I_{k,1})\\
&+
V^{agg}_{\pi_{k,1}}(I_{k,1})-V_{1}^{\pi_{k}}(x_{k,1})]
\end{align*}
\normalsize
Here $\Delta_{agg}$ is the regret of the discretized MDP and 
$\Delta_{error}$ is the error which is a consequence of discretizing the state space.

\subsection{Bounding $\Delta_{error}$}
\begin{lemma}
\label{thm:1}
Using the discretization technique mentioned above, $\Delta_{error}$ is bounded above by $\bm{const.Ln^{-\alpha}T}$.
\end{lemma}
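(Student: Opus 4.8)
The plan is to control $\Delta_{error}$ episode by episode, reducing each episode's contribution to a single inequality that compares a continuous value function with its aggregate counterpart. First I would isolate the two elementary discretization estimates that drive everything. For rewards, Assumption \ref{ass:1} gives, for every $s\in I(s)$ and action $a$,
\begin{equation*}
\big|r(s,a)-r^{agg}(I(s),a)\big| = \Big| n\!\int_{I(s)}\!\big(r(s,a)-r(t,a)\big)\,dt\Big| \le L n^{-\alpha},
\end{equation*}
since $|s-t|\le 1/n$ on each interval. For transitions, Assumption \ref{ass:2} gives $\|p(\cdot|s,a)-p(\cdot|t,a)\|_1\le L n^{-\alpha}$ for $t\in I(s)$; writing $P^{agg}(I'|I(s),a)=n\int_{I(s)}p^{agg}(I'|t,a)\,dt$, summing the aggregated masses over the target intervals and averaging over $t$ then yields $\sum_{I'}\big|p^{agg}(I'|s,a)-P^{agg}(I'|I(s),a)\big|\le L n^{-\alpha}$.

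Next I would set up a backward recursion in the horizon index for the executed policy. Fix the interval-constant continuous policy $\pi_k$ induced by $\pi_k^{agg}$ through \eqref{eq:policy}, and define $\delta_h=\sup_{s}\big|V_h^{\pi_k}(s)-V^{agg}_{\pi_{k,h}}(I(s))\big|$. Writing both Bellman backups with the common action $a=\pi_k(s,h)=\pi_k^{agg}(I(s),h)$ and subtracting, the reward terms contribute at most $L n^{-\alpha}$ by the first estimate. The transition terms
\begin{equation*}
\int p(s'|s,a)V^{\pi_k}_{h+1}(s')\,ds' - \sum_{I'}P^{agg}(I'|I(s),a)\,V^{agg}_{\pi_{k,h+1}}(I')
\end{equation*}
I would split into (i) $\sum_{I'}\int_{I'}p(s'|s,a)\big[V^{\pi_k}_{h+1}(s')-V^{agg}_{\pi_{k,h+1}}(I')\big]\,ds'$, which is bounded by $\delta_{h+1}$ because the weights integrate to one, and (ii) $\sum_{I'}\big[p^{agg}(I'|s,a)-P^{agg}(I'|I(s),a)\big]V^{agg}_{\pi_{k,h+1}}(I')$, which is at most $H\,L n^{-\alpha}$ since aggregate values lie in $[0,H]$ and by the second estimate. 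This produces the recursion $\delta_h\le \delta_{h+1}+(1+H)L n^{-\alpha}$ with $\delta_{H+1}=0$, hence $\delta_1\le H(1+H)L n^{-\alpha}$.

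For the optimal-value gap $V_1^*(x_{k,1})-V^{agg^*}_1(I_{k,1})$ I would run the identical recursion on the optimal Bellman operators, using $|\max_a f(a)-\max_a g(a)|\le \max_a|f(a)-g(a)|$ to pass the per-action bound through the maximization; the same split then gives an $O(H(1+H)L n^{-\alpha})$ bound on $\sup_s|V_h^*(s)-V^{agg^*}_h(I(s))|$. Summing both gaps over the $K$ episodes, using $T=KH$ and treating $H$ as a constant, collapses the per-episode bound $O(H^2 L n^{-\alpha})$ into $\Delta_{error}\le const\cdot L n^{-\alpha}T$. I expect the main obstacle to be the transition split: one must verify that replacing $V_{h+1}^{\pi_k}$ by the interval-constant $V^{agg}_{\pi_{k,h+1}}$ leaves a weight that integrates to one, so that term (i) feeds cleanly into $\delta_{h+1}$ rather than inflating the bound by a factor of $H$, and that the double averaging defining $P^{agg}$ does not worsen the $L_1$ transition estimate beyond $L n^{-\alpha}$.
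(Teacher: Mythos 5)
Your proof is correct, but it takes a genuinely different---and in fact more careful---route than the paper's. The paper uses the same split of each episode's error into the optimal-value gap $q_1^k$ and the policy-value gap $q_2^k$, but it then compares the two Bellman equations for a \emph{single} step: the reward difference is bounded essentially as you do (via Riemann sums rather than your direct integral estimate), while the transition term $\bigl|\int p\,V_2^* - \sum_j P^{agg}V^{agg*}_2\bigr|$ is bounded by replacing $V_2^*$ with its crude upper bound $H$ and invoking the identity $|ab-cd|\le|(a+c)(b-d)|+|bc-ad|$, giving $l_2^k\le 3HLn^{-\alpha}$; the gap $q_2^k$ is handled by rewriting both value functions under a single expectation over one trajectory, so that only $H$ reward differences appear and $|q_2^k|\le HLn^{-\alpha}$. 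This yields the paper's per-episode bound $(4H+1)Ln^{-\alpha}$ and hence $5Ln^{-\alpha}T$ after absorbing one factor of $H$ into $T=KH$. Your backward recursion $\delta_h\le\delta_{h+1}+(1+H)Ln^{-\alpha}$ instead propagates the value gap through the horizon and gives a per-episode bound of order $H^2Ln^{-\alpha}$, i.e.\ $\Delta_{error}\le 2(1+H)Ln^{-\alpha}T$, which is weaker by a factor of $H$---this is exactly why you must treat $H$ as a constant to recover the lemma as stated. The trade-off cuts both ways, however: the paper's sharper constant rests on two steps that are hard to justify. Replacing $V_2^*(s')$ by $H$ inside an absolute value is not a valid inequality in general (take $\int p\,V_2^*=0$ and $\sum_j P^{agg}V^{agg*}_2=H-1$; then the claimed upper bound is $1$ while the left side is $H-1$), and the single-expectation rewriting of $q_2^k$ silently couples trajectories of two different Markov chains ($p$ versus $P^{agg}$), so the transition-discretization error disappears from that term entirely. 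Your recursion is precisely what repairs both of these issues; the extra factor of $H$ it costs is the price of rigor, and it is harmless downstream, since it only alters the $H$-dependence of the regret bound (which the paper itself concedes is not optimal) and not the $T$-exponent obtained in the corollary.
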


\begin{proof}
Let us split the terms in $\Delta_{error}$ into two parts as
\footnotesize
\begin{align*}
q_{1}^k &= V_{1}^{*}(x_{k,1})-V^{agg^*}_{1}(I_{k,1}) \mbox{ and }\\ q_{2}^k &= V^{agg}_{\pi_{k,1}}(I_{k,1})-V_{1}^{\pi_{k}}(x_{k,1})\\
\Delta_{error}&\leq\sum_{k=1}^{K}\big(|q_1^k|+|q_2^k||\big)\text{\hspace{1cm}(Triangle Inequality)} 
\end{align*}
\normalsize
Consider $|q_{1}^k|$
\footnotesize
\begin{align*}
V_{1}^{*}(x_{k,1})&=\max_{a\in A}\big(r(x_{k,1},a)+\int_{S}p(ds'|x_{k,1},a)V_{2}^{*}(s')\big)\\
&(\mbox{Chapter 4 of \cite{hernandez2012discrete}})
\end{align*}
\begin{align*}       
V^{agg^*}_{1}(I_{k,1}) =\max_{a\in A}&\big(r^{agg}(I_{k,1},a)\\&+\sum_{j=1}^nP^{agg}(I_j|I_{k,1},a)V^{agg^*}_{2}(I_j) \big)\\
&\hspace{1cm}(\mbox{Bellman Equation})
\end{align*}
\begin{align*}
\mbox{Using the property } |\max f&- \max g| \leq \max|f-g|\\ 
|q_{1}^k|\leq\max_{a\in A}\Big|\big[r(x_{k,1},a)&-r^{agg}(I_{k,1},a)\big]\\
+\big[\int_{S}p(ds'|x_{k,1},a)&V_{2}^{*}(s')\\&-\sum_{j=1}^nP^{agg}(I_j|I_{k,1},a)V^{agg^*}_{2}(I_j)\big]\Big|
\end{align*}
\begin{align*}
\mbox{Let }l_1^k&=\max_{a\in A}\Big|r(x_{k,1},a)-r^{agg}(I_{k,1},a)\Big| \mbox{ and }\\
 l_2^k&=\max_{a'\in A}\Big|\int_{S}p(ds'|x_{k,1},a')V_{2}^{*}(s')\\
 &-\sum_{j=1}^nP^{agg}(I_j|I_{k,1},a')V^{agg^*}_{2}(I_j)\Big|
\end{align*}
\begin{align*}
\mbox{Using the property } &\max|f+g|\leq \max |f|+ \max |g| \\ 
|q_{1}^k|&\leq(l_1^k+l_2^k)
\end{align*}
\normalsize
For any action $a\in A$, $l_1^k$ is bounded by
\footnotesize
\begin{align*}
\big|r(x_{k,1},a)&- r^{agg}(I_{k,1},a)\big|=\big|r(x_{k,1},a)-  n.\int_{I_{k,1}}r(s,a)ds\big|\\& \text{ (from \eqref{eq:agg rew})}\\
&\leq\bigg|r(x_{k,1},a)-n.\bigg( \sum_{i=1}^nm_i\Delta x_i\bigg) \bigg|\\ 
&=\bigg|r(x_{k,1},a)-n.\bigg( \sum_{i=1}^nr(s'_{i},a)\frac{1}{n^2} \bigg) \bigg|\\
&\leq\frac{1}{n}.\sum_{i=1}^{n}\big| r(x_{k,1},a)-r(s'_{i},a)\big|
\leq Ln^{-\alpha}.
\end{align*}
\normalsize
The first inequality is obtained by replacing the integral with lower Riemann sum by dividing $I_{k,1}$ into n
intervals each of length $\Delta x_i=\frac{1}{n^2}$ and $m_i$ being the infimum in the sub-interval.
The second inequality follows since function $r$ is continuous w.r.t $s$ and infimum is the minimum. The last inequality follows from
the assumption \eqref{ass:1} and length of $I_j=\frac{1}{n}$.\\

For any action $a\in A$, $l_2^k$ is bounded by
\footnotesize
\begin{align*}
&\leq\big|H\sum_{j=1}^n\int_{I_j}p(ds'|x_{k,1},a)-\sum_{j=1}^nP^{agg}(I_j|I_{k,1},a)V^{agg^*}_{2}(I_j)\big|\\&\text{\hspace{0.5cm}(rewards $\in [0,1]$)} \\
&\leq\sum_{j=1}^n\big|H\int_{I_j}p(ds'|x_{k,1},a)-P^{agg}(I_j|I_{k,1},a)V^{agg^*}_{2}(I_j)\big|\\
&\text{Using }
\big|ab-cd\big|\leq\big|(a+c)(b-d)\big|+\big|bc-ad\big|\\
&a=:H \mbox{ and } b=:\int_{I_j}p(ds'|x_{k,1},a)\\
&c=:V^{agg^*}_{2}(I_j) \mbox{ and } d=:P^{agg}(I_j|I_{k,1},a)
\end{align*}
\begin{align*}
&\leq\sum_{j=1}^n\Big[|(V^{agg^*}_{2}(I_j)+H)\\&\hspace{1cm}(\int_{I_j}p(ds'|x_{k,1},a)-P^{agg}(I_j|I_{k,1},a))|\\&\hspace{1cm}+|V^{agg^*}_{2}(I_j)\int_{I_j}p(ds'|x_{k,1},a)-P^{agg}(I_j|I_{k,1},a)H|\Big]\\
&l_2^k\leq\sum_{j=1}^n\Big[3H\big|\int_{I_j}p(ds'|x_{k,1},a)-P^{agg}(I_j|I_{k,1},a)\big|\Big]\\
&\hspace{1cm}(V^{agg^*}_{2}(I_j)\leq H-1)
\end{align*}
\normalsize
From \eqref{eq:int prob}, replacing the integral with lower Riemann sum with n sub intervals and using the property of continuity that infimum of a function is equal to minimum in that closed interval.\\
\footnotesize
\begin{align*}
l_2^k&\leq\sum_{j=1}^n\Big[3H\big|\int_{I_j}p(ds'|x_{k,1},a)-n.\big(\sum_{i=1}^np^{agg}(I_j|s_i',a)\frac{1}{n^2}\big)\big|\Big]\\
&\leq\sum_{j=1}^n\Big[\frac{3H}{n}\sum_{i=1}^n\big|\int_{I_j}p(ds'|x_{k,1},a)-p^{agg}(I_j|s_i',a)\big|\Big]\\
&=\sum_{j=1}^n\Big[\frac{3H}{n}\sum_{i=1}^n\big|\int_{I_j}p(ds'|x_{k,1},a)-\int_{I_j}p(ds'|s_i',a)\big|\Big]\\
&=\frac{3H}{n}\sum_{i=1}^n\sum_{j=1}^n\big|\int_{I_j}p(ds'|x_{k,1},a)-\int_{I_j}p(ds'|s_i',a)\big|\\
&=\frac{3H}{n}\sum_{i=1}^n\big\| p(\cdot | x_{k,1},a) - p(\cdot | s_i',a) \big\|_1\\
&\leq3HLn^{-\alpha}\hspace{2cm} (x_{k,1},s_i'\in I_{k,1}\forall i \mbox{ and using } \eqref{ass:2})
\end{align*}
\normalsize
Now we have,
\footnotesize
\begin{align*}
 |q_{2}^k| = \big|\mathds{E}_{\pi^k} \Bigg( \sum_{h=1}^{H}(r(x_{k,h},\pi_k)&-r^{agg}(I_{k,h}',\pi_k)) \Bigg)\big|\hspace{1cm}\\&(\mbox{Using }\eqref{eq:policy})
\end{align*}
\normalsize
Hence  $|q_{2}^k|$ can be bound by bounding the difference of the rewards inside the expectation which is similar to bounding $l_1^k$.
For any action $a$ we have,
\footnotesize
\begin{align*}
|q_{2}^k|\leq HLn^{-\alpha}
\end{align*}
\normalsize
Now $\Delta_{error}$ is bounded by
\footnotesize
\begin{align*}
\Delta_{error}=\sum_{k=1}^{K}(|q_{1}^k|+|q_{2}^k|)&\leq \sum_{k=1}^{K} (4H+1)Ln^{-\alpha}\\
 &\leq 5Ln^{-\alpha}T \text{ (as $T\geq(KH)$)}
\end{align*}
\end{proof}
\section{\textbf{Additional Result}: Upper Bound on Regret}
We derived a tighter upper bound on the regret for continuous case(when using the proposed algorithm) when the transition estimates found by the algorithm are unbiased estimates of \eqref{eq:int prob}.The below theorem states that.

\begin{theorem}
Let M be an MDP with continuous state space $[0,1]$, \bm{$A$} actions, known reward function \bm{$r$} and unknown transitions satisfying 
assumptions \ref{ass:1},\ref{ass:2} and the transition estimates are unbiased estimates of \eqref{eq:int prob}. Then with probability $1-\delta$, the regret of UCVBI-CRL after T steps is upper bounded by  
\begin{equation}\tag{6}\label{fullreg}
5Ln^{-\alpha}T+30L'\sqrt{nATH}+2500H^2n^2AL^2+4H\sqrt{TL'}
\end{equation}
where $L' = ln(5HnAT/\delta)$. 
\end{theorem}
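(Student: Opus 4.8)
The plan is to reuse the regret decomposition already established in the \textbf{Results} section, $\mathrm{Regret}(K)=\Delta_{error}+\Delta_{agg}$. Lemma~\ref{thm:1} bounds $\Delta_{error}$ by $5Ln^{-\alpha}T$, which supplies the first summand of \eqref{fullreg} verbatim; the entire remaining task is to bound $\Delta_{agg}=\sum_{k=1}^{K}\big(V^{agg^*}_{1}(I_{k,1})-V^{agg}_{\pi_{k,1}}(I_{k,1})\big)$, the regret incurred on the \emph{finite} aggregated MDP with $n$ states $S^*=\{I_1,\dots,I_n\}$, $A$ actions and horizon $H$. Because this is precisely the UCBVI setting, I would mirror the analysis of Azar et al.~\cite{azar2017minimax}, substituting the aggregated objects $\hat{P}^{agg}_k$, $r^{agg}$ and $V^{agg}_{k,h}$, and using the hypothesis that $\hat{P}^{agg}_k$ is an unbiased estimate of $P^{agg}$ in \eqref{eq:int prob} so that all concentration is centred on the true aggregated transition, which is what removes the usual within-interval sampling bias.

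The first substantial step is \textbf{optimism}: show that on an event of probability at least $1-\delta$, $V^{agg}_{k,h}(x)\ge V^{agg^*}_{h}(x)$ for all $x\in S^*$, all episodes $k$ and all steps $h$ (the monotone clipping by $Q^{agg}_{k-1,h}$ in the algorithm preserves optimism across episodes). I would argue by backward induction on $h$ from the base case $V^{agg}_{k,H+1}\equiv 0$. The inductive step reduces to controlling $\big|\big((\hat{P}^{agg}_k-P^{agg})V^{agg^*}_{h+1}\big)(x,a)\big|$, which I would bound by the Bernstein--Freedman inequality; the three pieces of the bonus in \eqref{eq:bonus} are tailored so that the empirical-variance term, the $14HL/3N_k$ term, and the $N'_{k,h}$ correction term jointly dominate this deviation, the last piece accounting for the gap between the empirical variance of the running estimate $V^{agg}_{k,h+1}$ and the true variance of $V^{agg^*}_{h+1}$. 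Optimism immediately yields $\Delta_{agg}\le\sum_{k=1}^{K}\big(V^{agg}_{k,1}(I_{k,1})-V^{agg}_{\pi_{k,1}}(I_{k,1})\big)$.

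Next I would telescope this per-episode gap along the executed trajectory $x_{k,1},\dots,x_{k,H}$ using the Bellman recursion defining $Q^{agg}_{k,h}$. Unrolling leaves, for each episode, a sum of the bonuses $b_{k,h}(x_{k,h},a_{k,h})$ collected on the path plus a martingale difference arising from replacing the estimated one-step expectation by the value at the realized next state; Azuma--Hoeffding over the $T\ge KH$ steps, each increment at most $H$, produces the $4H\sqrt{TL'}$ summand. For the leading bonus term I would apply Cauchy--Schwarz as $\sum_{k,h}\sqrt{\mathrm{Var}_{k,h}/N_k}\le\big(\sum_{k,h}\mathrm{Var}_{k,h}\big)^{1/2}\big(\sum_{k,h}1/N_k\big)^{1/2}$, where a law-of-total-variance argument gives $\sum_{k,h}\mathrm{Var}_{k,h}=\tilde{O}(TH)$ and the pigeonhole estimate gives $\sum_{k,h}1/N_k=\tilde{O}(nA)$, so their product is the dominant $30L'\sqrt{nATH}$ term. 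The remaining $14HL/3N_k$ and $N'_{k,h}$ correction pieces, summed with the cruder bound $\sum_{k,h}1/N_k=\tilde{O}(nA)$ and noting $S=n$ inside the correction, contribute the lower-order $2500H^2n^2AL^2$ term.

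The hardest part is the optimism argument together with the bookkeeping of the three bonus components: one must check that the Bernstein bound remains valid after the true variance of $V^{agg^*}_{h+1}$ is replaced by the empirical variance of the running estimate $V^{agg}_{k,h+1}$, and that the law-of-total-variance step keeps the leading order at $\sqrt{nATH}$ rather than forfeiting an extra factor of $\sqrt{H}$. Once optimism and the two summation estimates are secured, combining the $\Delta_{agg}$ bound with the $\Delta_{error}$ contribution of Lemma~\ref{thm:1} reproduces exactly the four summands of \eqref{fullreg}, completing the argument.
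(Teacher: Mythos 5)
Your proposal is correct and takes essentially the same route as the paper: the paper's own proof is just the decomposition $\mathrm{Regret}(K)=\Delta_{error}+\Delta_{agg}$, with $\Delta_{error}$ bounded by Lemma~\ref{thm:1} and $\Delta_{agg}$ bounded by citing Theorem~2 of \cite{azar2017minimax} as a black box. Your sketch merely unpacks that citation into the underlying UCBVI analysis (optimism via Bernstein bonuses, telescoping, Azuma--Hoeffding, law of total variance), so it is consistent with, and more detailed than, the paper's argument.
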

\begin{proof}
The regret in the continuous setting can be split into $\Delta_{error}$ and $\Delta_{agg}$.
The bound on $\Delta_{error}$ comes from Lemma \ref{thm:1} and bound on $\Delta_{agg}$ is from 
Theorem 2 of \cite{azar2017minimax}.
\end{proof}

\begin{corollary}
We set $n=T^{x}$. It can be seen that when $\alpha \in [1/k,1]$, setting $x < \frac{k}{2k+1}$ 
first term dominates while the third term $2500H^2n^2AL^2$ dominates when $x >= \frac{k}{2k+1}$.
Thus the optimal regret of $O(L^2H^2AT^{2k/2k+1})$ is obtained by setting $x=\frac{k}{2k+1}$. 
This is better than the regret bound of order $T^{\frac{2k+1}{2k+2}}$ for the same in 
\cite{contrl}. And in the Lipschitz case i.e., when $\alpha =1$ we have regret of order
$O(T^{2/3})$. 
\end{corollary}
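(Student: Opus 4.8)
The plan is to treat the four-term bound \eqref{fullreg} of the preceding Theorem as a function of the discretization parameter and optimize it. Substituting $n = T^{x}$ and regarding $A$, $H$, and the logarithmic factors $L, L'$ as constants (they contribute only polylogarithmic factors in $T$), I would first record the $T$-exponent of each summand: the first term $5Ln^{-\alpha}T$ scales as $T^{1-\alpha x}$, the second term $30L'\sqrt{nATH}$ scales as $T^{(1+x)/2}$, the third term $2500H^2n^2AL^2$ scales as $T^{2x}$, and the fourth term $4H\sqrt{TL'}$ scales as $T^{1/2}$.

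The key observation is that among the four exponents only two vary with $x$, and they do so in opposite directions: $1-\alpha x$ is strictly decreasing while $2x$ is strictly increasing. Hence the leading polynomial order is minimized by balancing the first and third terms. For the tightest (smallest-$\alpha$) case $\alpha = 1/k$, setting $1-\alpha x = 2x$ gives $x = 1/(2+\alpha) = k/(2k+1)$, so that for $x < k/(2k+1)$ the first term dominates and for $x \ge k/(2k+1)$ the third dominates, which is exactly the dichotomy asserted in the statement. Moreover, for every $\alpha \in [1/k,1]$ the choice $x = k/(2k+1)$ keeps the first-term exponent $1-\alpha x \le 1 - \tfrac{1}{k}\cdot\tfrac{k}{2k+1} = \tfrac{2k}{2k+1}$, so the third term, with exponent $2x = 2k/(2k+1)$, governs the rate and gives $O(L^2H^2A\,T^{2k/(2k+1)})$.

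What remains—and is the only real bookkeeping step—is to confirm that the second and fourth terms never dominate at the chosen $x$. The fourth term has exponent $\tfrac12 = \tfrac{2k+1}{4k+2}$, which is at most $\tfrac{2k}{2k+1} = \tfrac{4k}{4k+2}$ precisely when $k \ge \tfrac12$; the second term has exponent $\tfrac{1+x}{2} = \tfrac{3k+1}{4k+2}$, which is at most $\tfrac{4k}{4k+2}$ precisely when $k \ge 1$. Both hold on the admissible range $\alpha \in [1/k,1]$ (equivalently $k \ge 1$), so the third term indeed controls the rate and the claimed regret $O(L^2H^2A\,T^{2k/(2k+1)})$ follows.

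Finally I would compare with \cite{contrl}: since $2k(2k+2) = 4k^2+4k < 4k^2+4k+1 = (2k+1)^2$, we get $\tfrac{2k}{2k+1} < \tfrac{2k+1}{2k+2}$, so the new exponent is strictly smaller. Specializing to the Lipschitz case $\alpha = 1$ (i.e.\ $k=1$) yields the optimal choice $x = 1/3$ and regret $O(T^{2/3})$. The main subtlety, rather than any hard estimate, is stating the domination pattern among the four terms uniformly over the whole admissible range of $x$ and $\alpha$—not only at the single balance point—and being explicit that the $L, L'$ factors are absorbed into the $O(\cdot)$ as polylogarithmic terms.
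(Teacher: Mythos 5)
Your proposal is correct and follows essentially the same route as the paper: substitute $n=T^{x}$, observe that only the first and third terms' exponents vary with $x$ (decreasing and increasing respectively), and balance them at $x=k/(2k+1)$ to get the rate $T^{2k/(2k+1)}$, which beats $T^{(2k+1)/(2k+2)}$ and reduces to $T^{2/3}$ when $\alpha=1$. Your explicit check that the second and fourth terms (exponents $(3k+1)/(4k+2)$ and $1/2$) never dominate for $k\geq 1$ is a useful piece of bookkeeping that the paper leaves implicit, but it does not change the argument.
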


\begin{remark}
 The dependence on $H$, $L$ and $A$ is not optimal. If the algorithms of \cite{contrl} and \cite{lakshmanan} are adapted for finite-horizon setting
 We again wish to emphasize the fact that our algorithm is tractable
 unlike the algorithms in the other two papers.
\end{remark}

\begin{remark}
 The regret bound in \cite{lakshmanan} is also of order $\tilde{O}(T^{2/3})$ for the infinite horizon problem in the Lipschitz case.
 But there is an additional parameter  $\beta$ which depends on the smoothness of the transition function. And only in
 the asymptotic case when $\beta \rightarrow \infty$ the regret of $\tilde{O}(T^{2/3})$ is attained.
\end{remark}

\begin{remark}
 For $d$-dimensional state space we have $n^d$ states in the last three terms of equation \eqref{fullreg}. So setting $n=T^{\frac{k}{2kd+1}}$ we get
 a regret of $\tilde{O}(T^{\frac{2kd}{2kd+1}})$ which is better than the bound of $\tilde{O}(T^{\frac{2kd+1}{2kd+2}})$ in \cite{contrl} for the case
 $\alpha \in [1/k,1]$.
\end{remark}

\section{Lower Bounds}

We have the following theorem giving the lower bound.
\begin{theorem}
 For any algorithm $\mathcal{A}$ using state-space discretization, any natural numbers $A$ and $T \geq A$, 
 the expected regret for $\mathcal{A}$ after $T$ timesteps for the MDP $\mathcal{M}$ with state $S=[0,1]$, actions $A$ defined above is
 \begin{equation}
  E(\Delta_T(\mathcal{A})) \geq L. \sqrt{HA} T^{2/3}
 \end{equation}
\end{theorem}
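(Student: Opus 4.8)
The plan is to prove the bound by exposing a tension that every discretization algorithm must face: a coarse grid cannot represent the optimal policy and so pays an \emph{approximation} (bias) cost, while a fine grid creates too many cells to be learned within $T$ steps and so pays a \emph{statistical} (exploration) cost. An algorithm ``using state-space discretization'' commits to some number of intervals $n$ (possibly depending on $T$), so I would let the adversary choose the instance \emph{after} $n$ is revealed, construct for each $n$ a Lipschitz MDP on which the relevant cost is $\Omega(T^{2/3})$, and then take the worse of the two regimes. This trade-off is exactly what is \emph{absent} from the $\Omega(\sqrt{T})$ bound of \cite{contrl}; capturing it is what upgrades the rate.

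For the \textbf{bias} regime I would not use any information argument at all. Since the executed policy is constant on each interval of width $1/n$, I would place within each cell two actions $a_1,a_2$ whose mean rewards are H\"{o}lder and cross at the cell midpoint, each sweeping the full H\"{o}lder budget $L(1/n)^{\alpha}=Ln^{-\alpha}$ across the cell (Assumption~\ref{ass:1} and Assumption~\ref{ass:2} are met by construction, and I would taper linearly so the kernels stay valid probability measures). Then any cell-constant policy is sub-optimal by $\Omega(Ln^{-\alpha})$ on a constant fraction of every cell, hence at an $\Omega(1)$ fraction of steps, giving $E(\Delta_T)\ge c\,Ln^{-\alpha}T$ \emph{regardless of learning}, because the sub-cell structure is information-theoretically invisible to the grid. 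This is precisely Lemma~\ref{thm:1} run in reverse, and it dominates when $n$ is small.

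For the \textbf{statistical} regime I would treat the discretized MDP as a tabular finite-horizon problem with $n$ states and $A$ actions and invoke the standard information-theoretic argument. I would build a family of $2^{\Theta(n)}$ instances by independently planting, in each cell, a hidden best action carrying a small reward gap $\Delta$, couple every instance to a common reference kernel, and control distinguishability via Pinsker's inequality and the chain-rule bound on the trajectory divergence by $\sum_{x,a}E[N_T(x,a)]\,\mathrm{kl}(\Delta)$ with $\mathrm{kl}(\Delta)\asymp\Delta^{2}$. With the average per-cell local time $T/n$ and the horizon contributing the value range, tuning $\Delta\asymp\sqrt{HA/(T/n)}$ saturates the bound and yields $E(\Delta_T)\ge c'\sqrt{HnAT}$, the mirror of the $\sqrt{nATH}$ term in \eqref{fullreg}; this dominates when $n$ is large and shows the lower bound meets the upper bound there.

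Combining, for every admissible grid we have $E(\Delta_T)\ge\max\{\,c\,Ln^{-\alpha}T,\;c'\sqrt{HnAT}\,\}$, whose right-hand side is minimized over $n$ at the crossover $n\asymp(T/(HA))^{1/(2\alpha+1)}$; substituting back gives a bound of order $T^{(\alpha+1)/(2\alpha+1)}$, which in the Lipschitz case $\alpha=1$ is the claimed $T^{2/3}$ rate, and since the instance is chosen after $n$ is fixed the bound holds uniformly over all discretization algorithms. I expect the main obstacle to be neither estimate in isolation but rather (i) verifying that both constructions remain legitimate H\"{o}lder MDPs with rewards in $[0,1]$ and genuine transition kernels, and (ii) driving the information-theoretic step to deliver the exact $\sqrt{HA}$ (and $L$) prefactor advertised rather than a weaker power; in particular, obtaining the correct horizon dependence typically requires planting the hard bandit across the $H$ layers so that mis-identification compounds over the episode, which is more delicate than the single-step continuum-bandit argument that already fixes the $T^{2/3}$ exponent.
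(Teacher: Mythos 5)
Your proposal is correct and follows essentially the same route as the paper: the paper likewise splits the regret into the tabular regret of the discretized MDP (citing the $\Omega(\sqrt{nHAT})$ bound of \cite{ucrl}, which you re-derive rather than cite) and the discretization error from a crossing-rewards construction in which each interval contains sub-intervals with different optimal actions, giving $\Omega(Ln^{-\alpha}T)$, and then balances the two terms at $n \asymp T^{1/3}$ to obtain the $T^{2/3}$ rate. Your treatment is somewhat more careful than the paper's (explicit adversary-after-$n$ argument, general H\"{o}lder exponent, and acknowledgment that the $L\sqrt{HA}$ prefactor does not fall out cleanly from the balancing), but the decomposition, constructions, and balancing step are the same.
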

\begin{proof}

We note that any algorithm using state-space discretization also incurs a regret by choosing a wrong action due to
discretization error. Hence we have 
\begin{enumerate}
\item Regret in the discretized MDP
\item Regret due to the discretization error 
\end{enumerate}

The total regret is atleast maximum of these two.
The first term is of order $\sqrt{nHAT}$ (See \cite{ucrl}).
Now consider the MDP shown in figure \ref{fig:lmdp} with two actions in each state. 

\begin{figure}[h]
\centering
\includegraphics[width=2.5in]{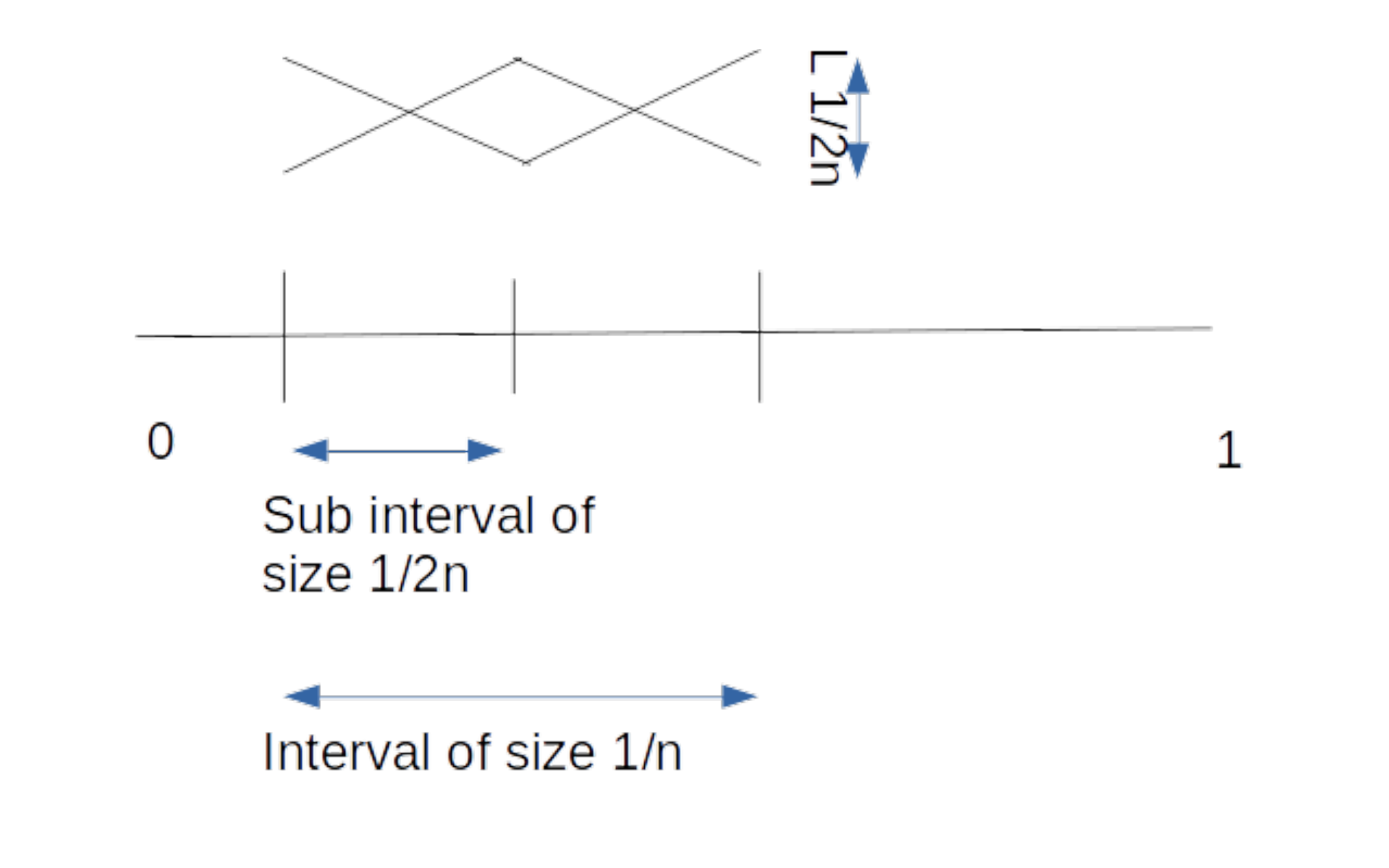}
\caption{State space and reward for the MDP}
\label{fig:lmdp}
\end{figure}
  
Let the algorithm discretize the state into $n$ intervals.
Let us assume that there are two sub-intevals in each interval and the optimal
action in each sub-interval is different. Assume for simplicity that all intervals are of equal size. Otherwise divide each interval into
sub-intervals of size half of the least interval size and assume constant reward for the remaining length of the intervals.

The mean rewards for actions are as shown in the figure \ref{fig:lmdp}, i.e., they are fixed for a state.
It can be seen that it is not possible to determine the optimal action for
both these sub-intervals simultaneously. And the regret obtained due to discretization
is of order $L.\frac{1}{2n}$ as the the rewards are Lipschitz continuous.
Since this regret is incurred on an average for atleast half of the states visited $T/2$. The total regret is of order $L.\frac{T}{4n}$.
Balancing this and the regret in the discretized MDP by taking $n=T^{1/3}$ we get a regret of order $\Omega(L.\sqrt{HA}T^{2/3})$.
\end{proof}

\begin{remark}

We note that algorithms like value iteration and non-stationary value iteration algorithms (Chapter 3 of \cite{mdpbook1})
which do not use state-space discretization cannot be used here as the probability
transition function is not known.
\end{remark}
\begin{remark}
It is straight forward to show a regret bound of $\Omega(L.\sqrt{H_{sp}A}T^{2/3})$ for infinite horizon setting where $H_{sp}$ is the span of the 
optimal bias function.
\end{remark}

\begin{figure}[h]
\centering\includegraphics[scale=0.18]{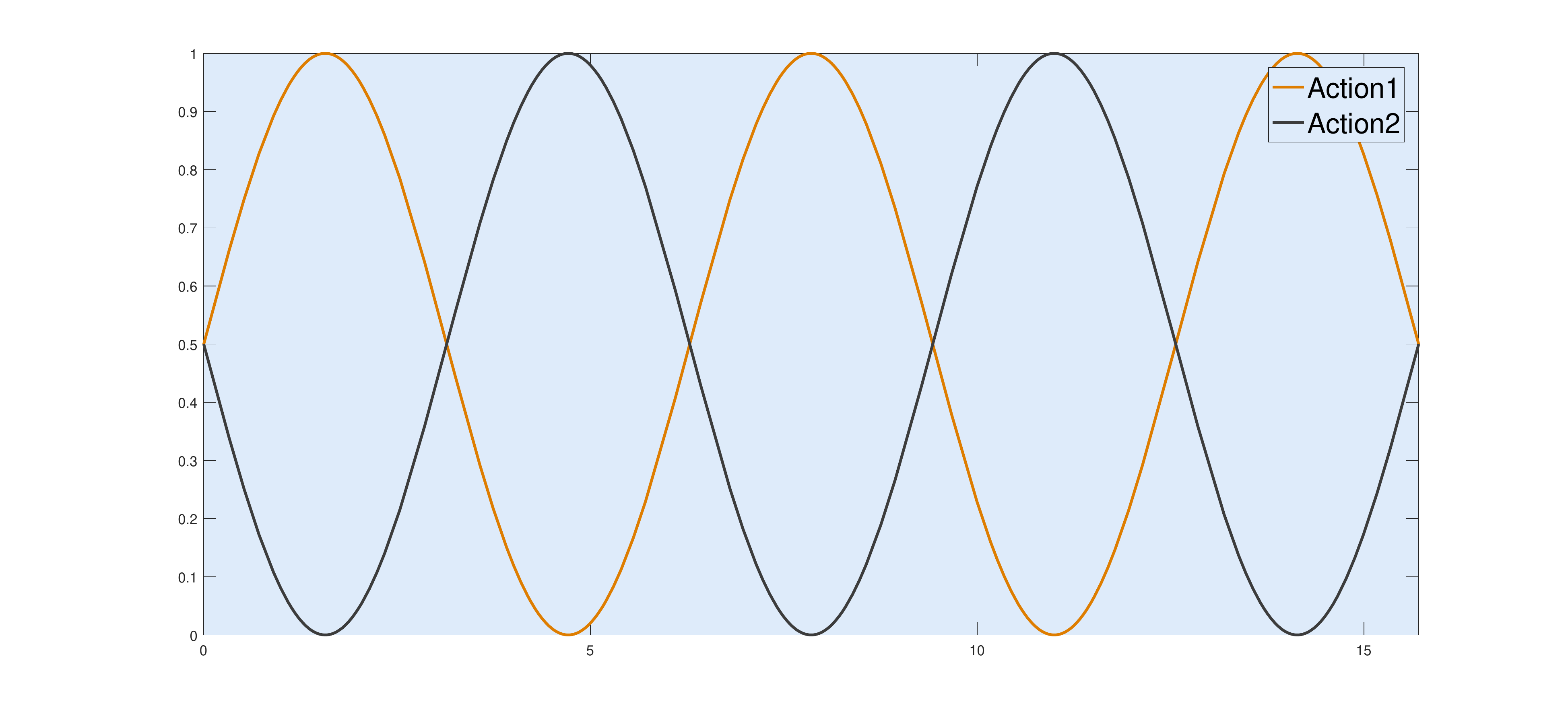}
\caption{1-D Reward Functions}
\label{fig:r-1d}
\centering\includegraphics[scale=0.18]{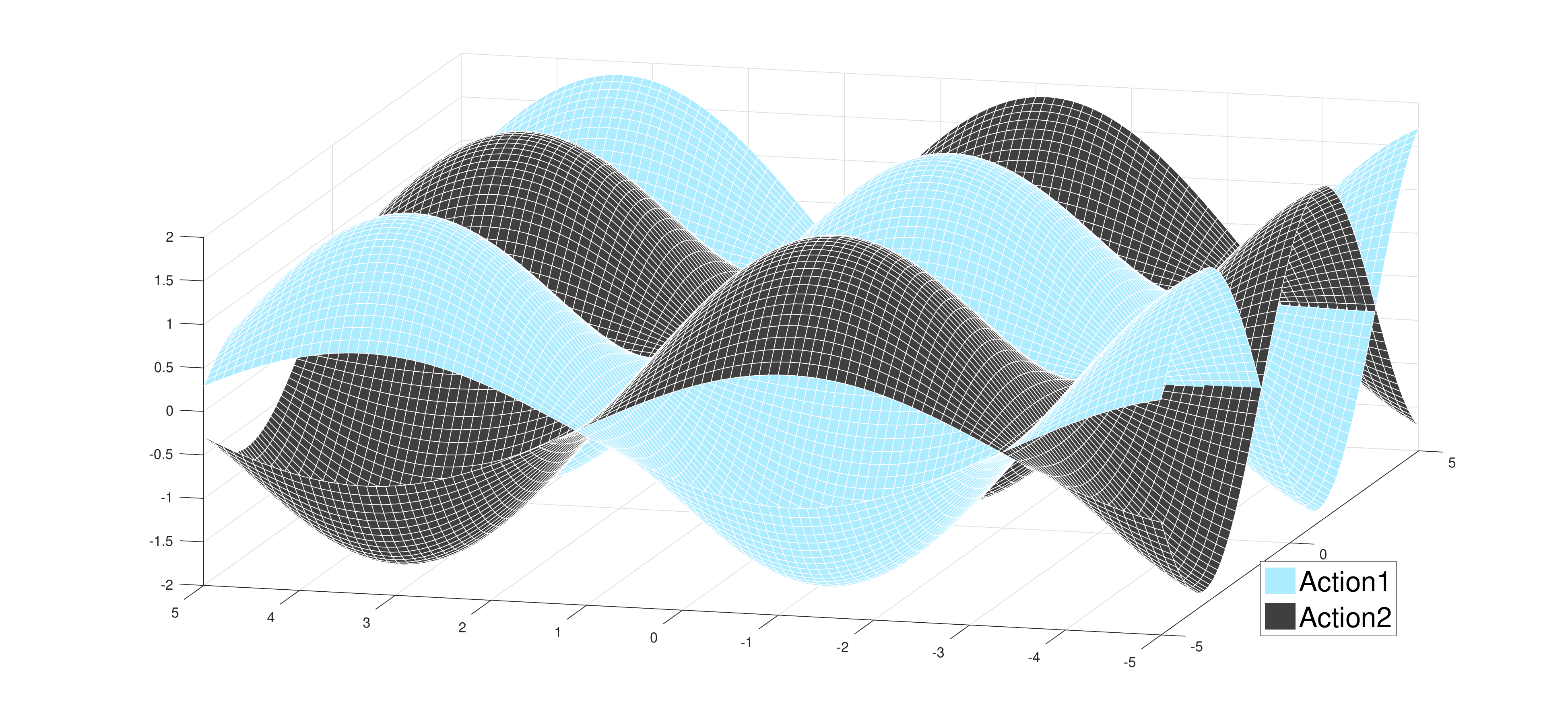}
\caption{2D reward functions plot}\label{fig:r-2d}
\end{figure}

\section{Experiments}
We have performed our experiments on one dimensional and two dimensional state spaces. 
For one-dimensional case we implemented the algorithm on a simple problem with $A=2$ with $[0,5\pi]$ being the state space $S$.
In this problem, the reward functions for the actions $a_1, a_2$ are $\sin(x)$ and 
$-\sin(x)$ respectively. 
So the optimal policy is, for a state $x$ in interval $[n\pi,(n+1)\pi]$, the one which takes action $a_2$ if $n$ is odd and 
\begin{figure}[H]
\begin{tabular}{c}
\subfloat[]{\includegraphics[width = 3in]{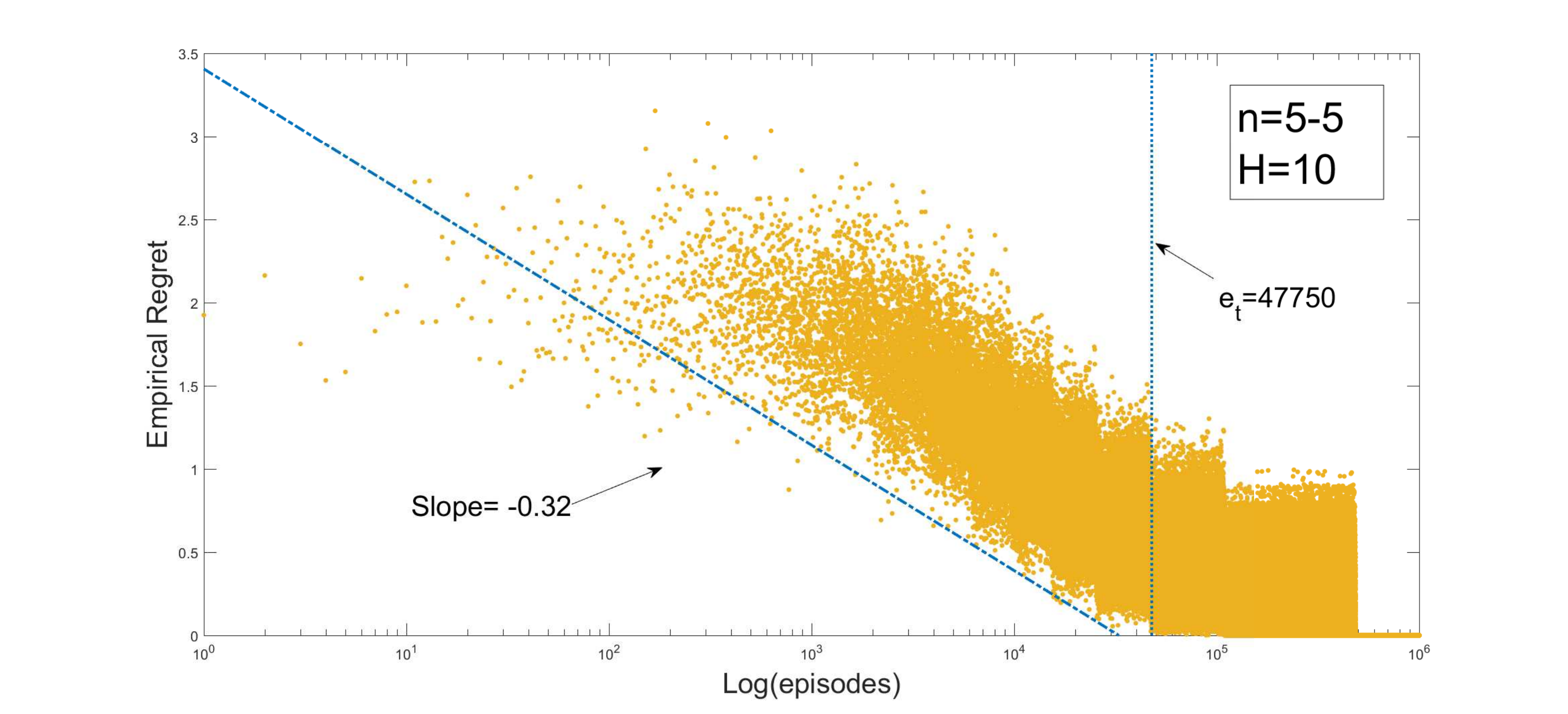}}\\
\subfloat[]{\includegraphics[width = 3in]{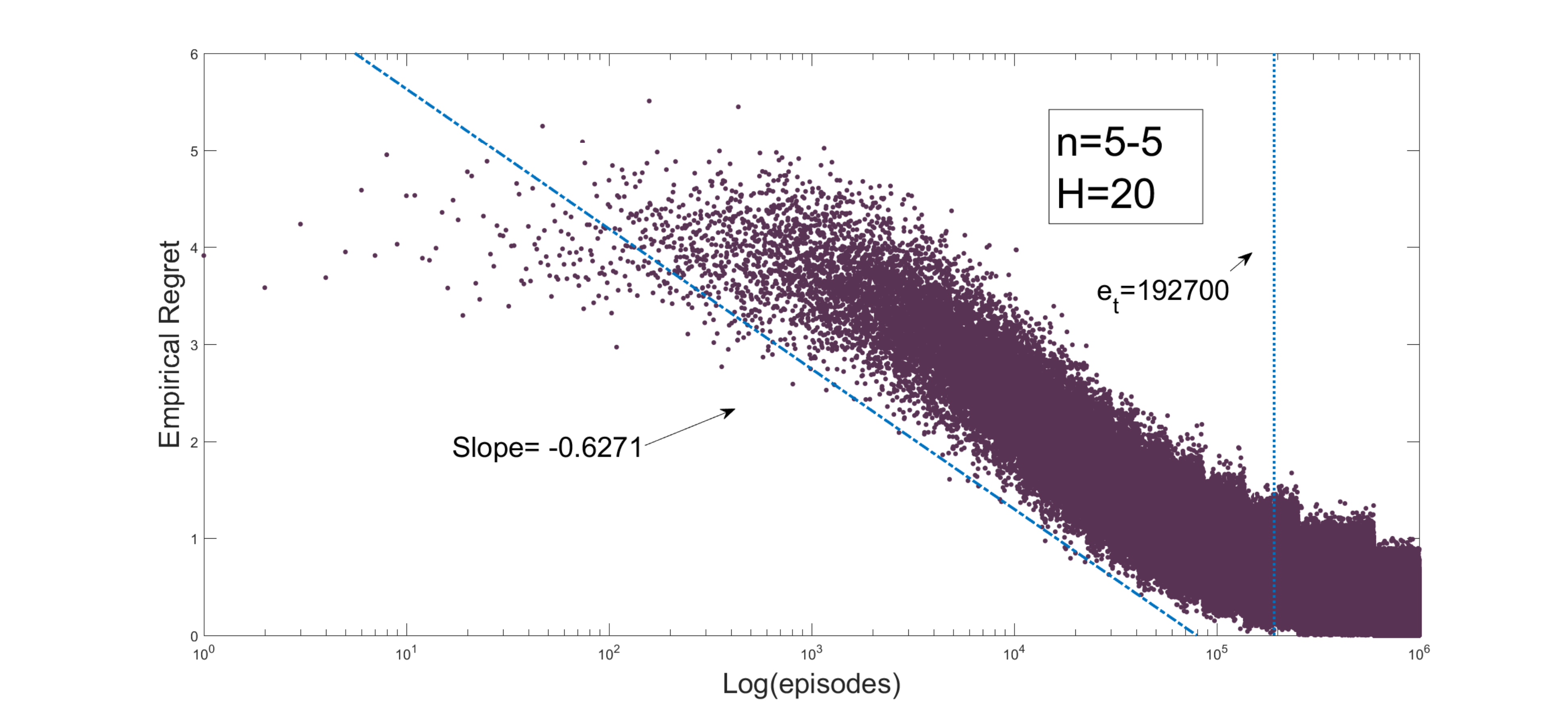}}\\
\subfloat[]{\includegraphics[width = 3in]{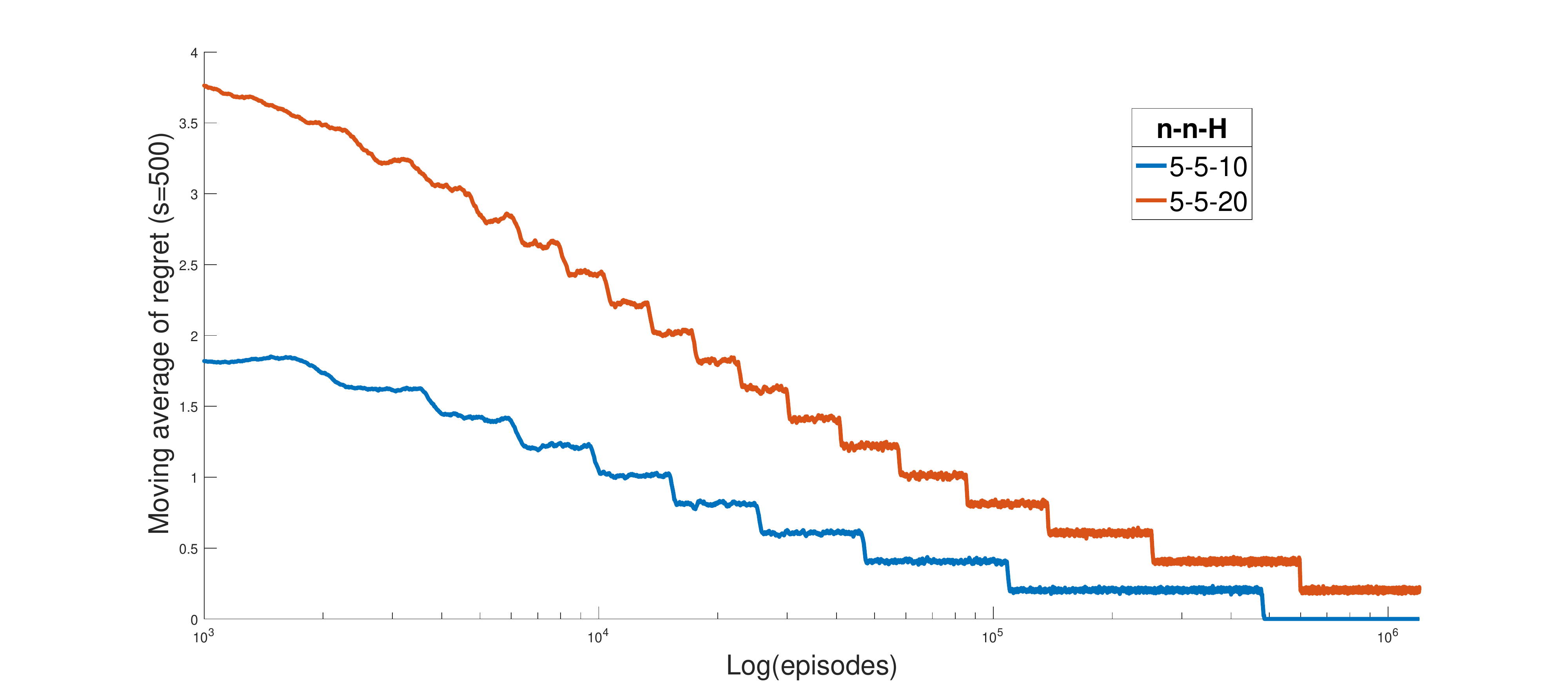}}
\end{tabular}
\caption{Plots (a), (b) show the regret on log scale of the two-dimensional problem for different settings. Plot (c) is obtained by taking 
moving average on the regret.}
\label{fig:regret-2d}
\end{figure}

$a_1$ if $n$ is even.
The reward was scaled to the range $[0,1]$ to meet the requirements of \cite{azar2017minimax}.
From the current state $x$, after taking an action $a_{i}$ such that $i \in{1,2}$, the next state is sampled uniformly from the state space $S$.

We can see from Figure \ref{fig:regret-1d} that the empirical regret is converging faster for larger number of intervals, for the same horizon length
The convergence can be better understood by comparing the the points where the 
empirical regret approaches to $0$. The very first point, in every plot, where the empirical regret is zero is indicated by a dashed vertical
line showing the episode number. 
The results match the intuition that for the same problem having more number of states improves the performance.

By keeping $n$ constant and varying $H$ it can be seen from the plots that regret approaches zero slower as $H$ is increases.
It can also be seen that the regret for larger horizon length at a particular episode was higher than the regret at the same episode for smaller horizon
length. Thus the regret increases with the horizon length as expected.

\begin{figure}[H]
\subfloat[]{\includegraphics[width = 3in]{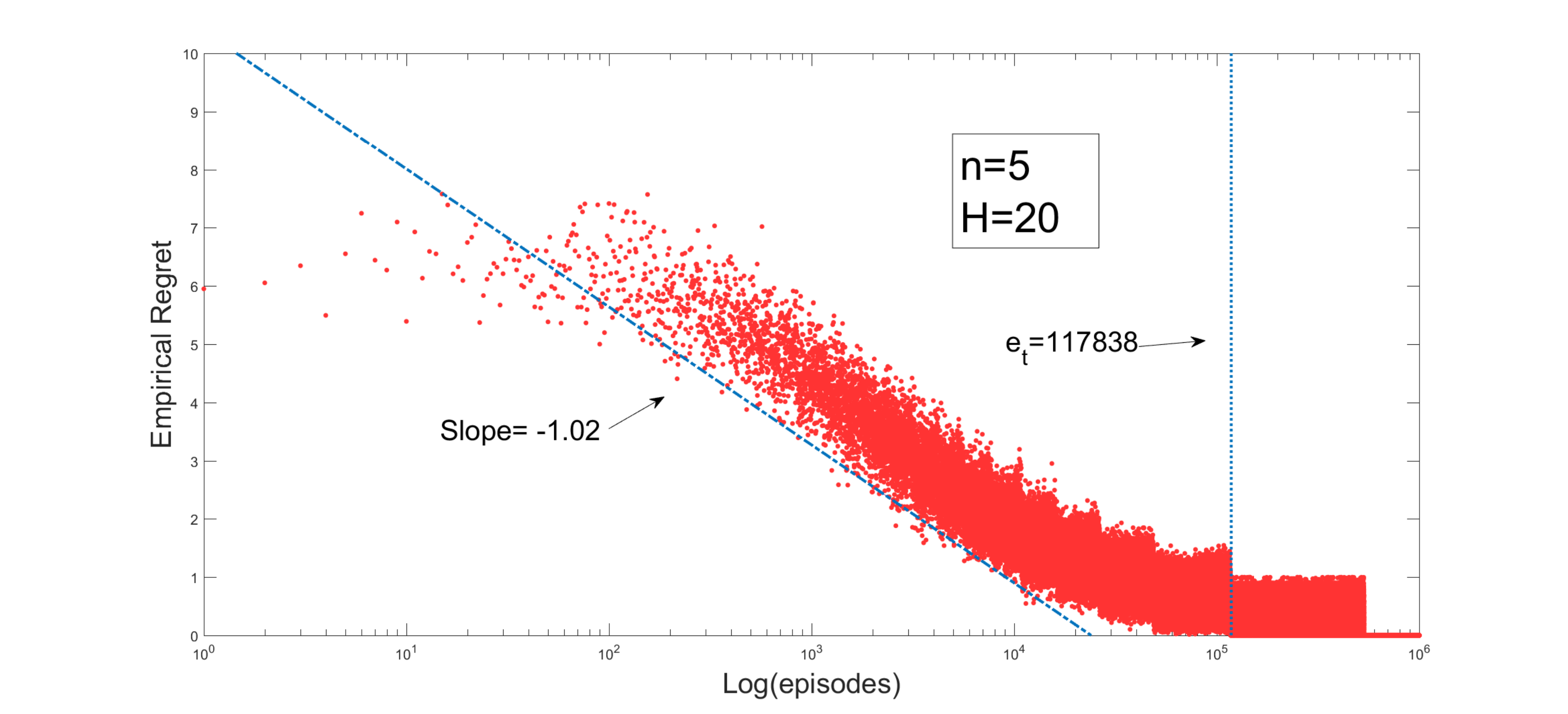}}\\
\subfloat[]{\includegraphics[width = 3in]{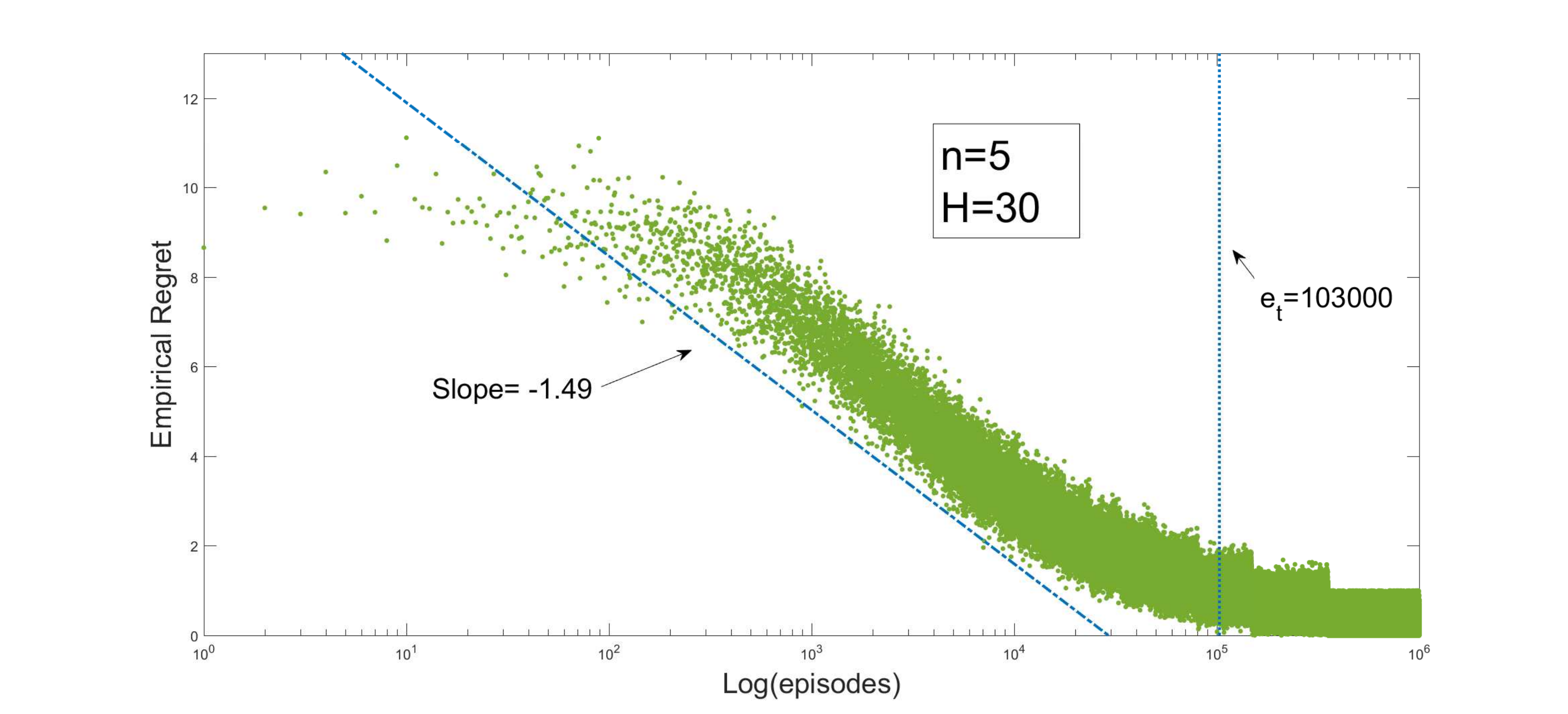}}\\
\subfloat[]{\includegraphics[width = 3in]{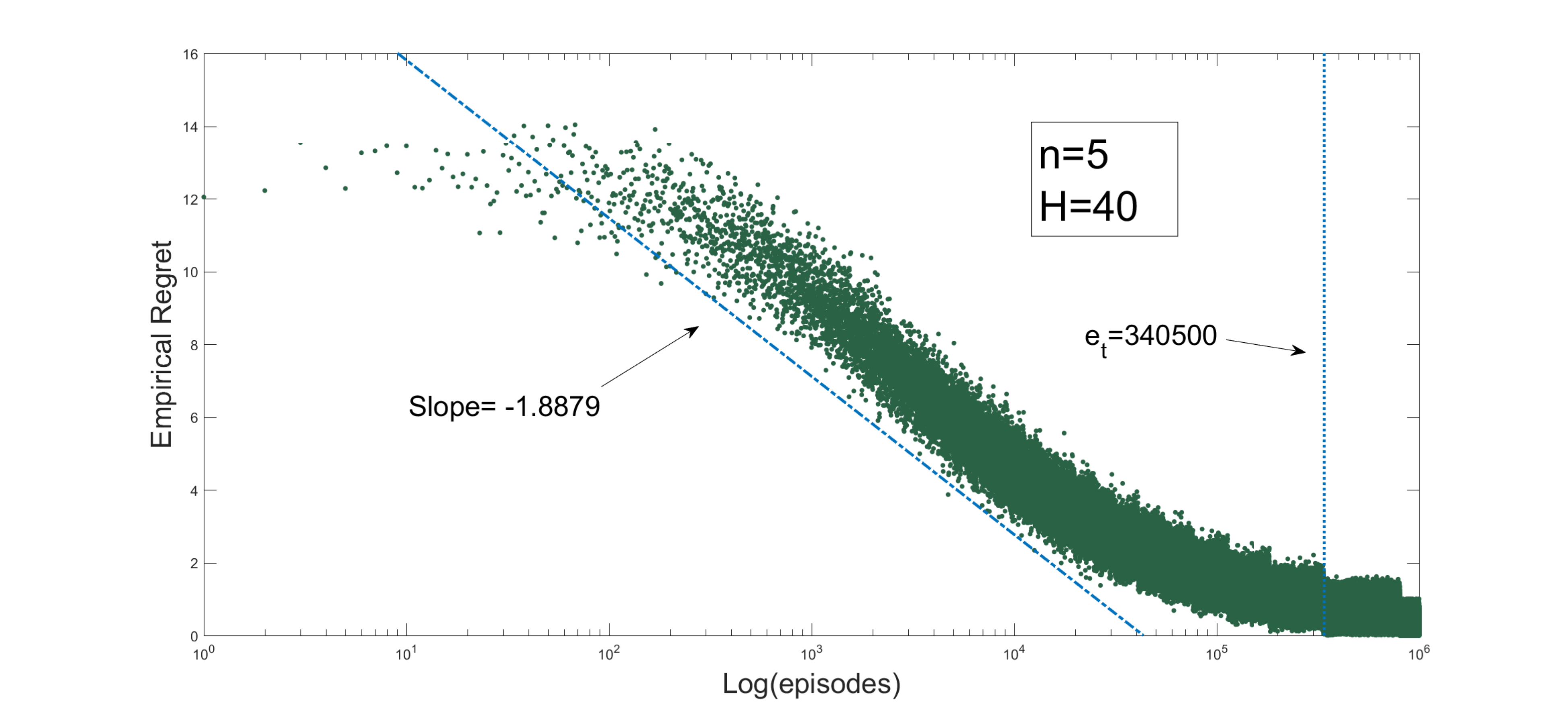}}
\caption{Plots (a), (b), (c) show the regret for different settings in the one-dimensional case.}
\end{figure}
\begin{figure}[H]
\ContinuedFloat
\subfloat[]{\includegraphics[width = 3in]{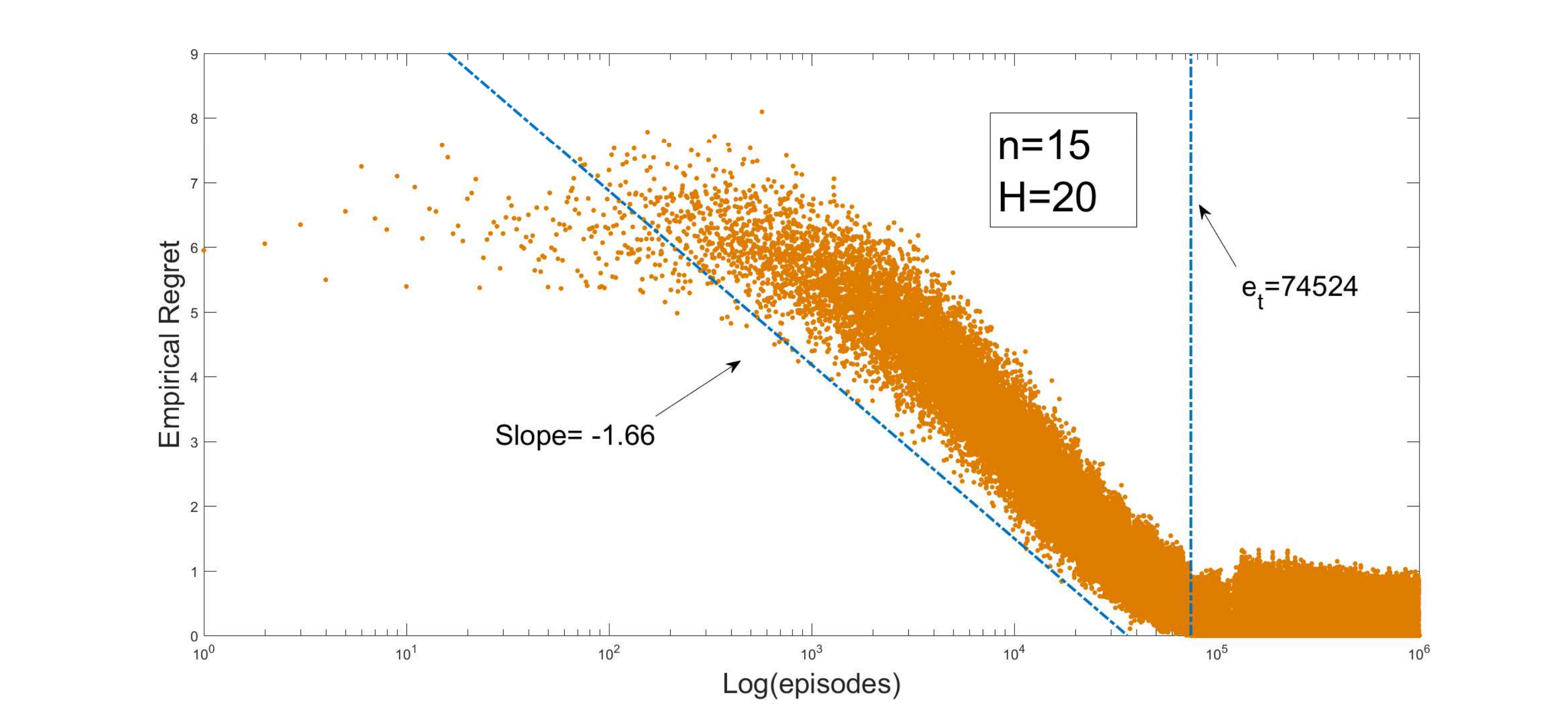}}\\
\subfloat[]{\includegraphics[width = 3in]{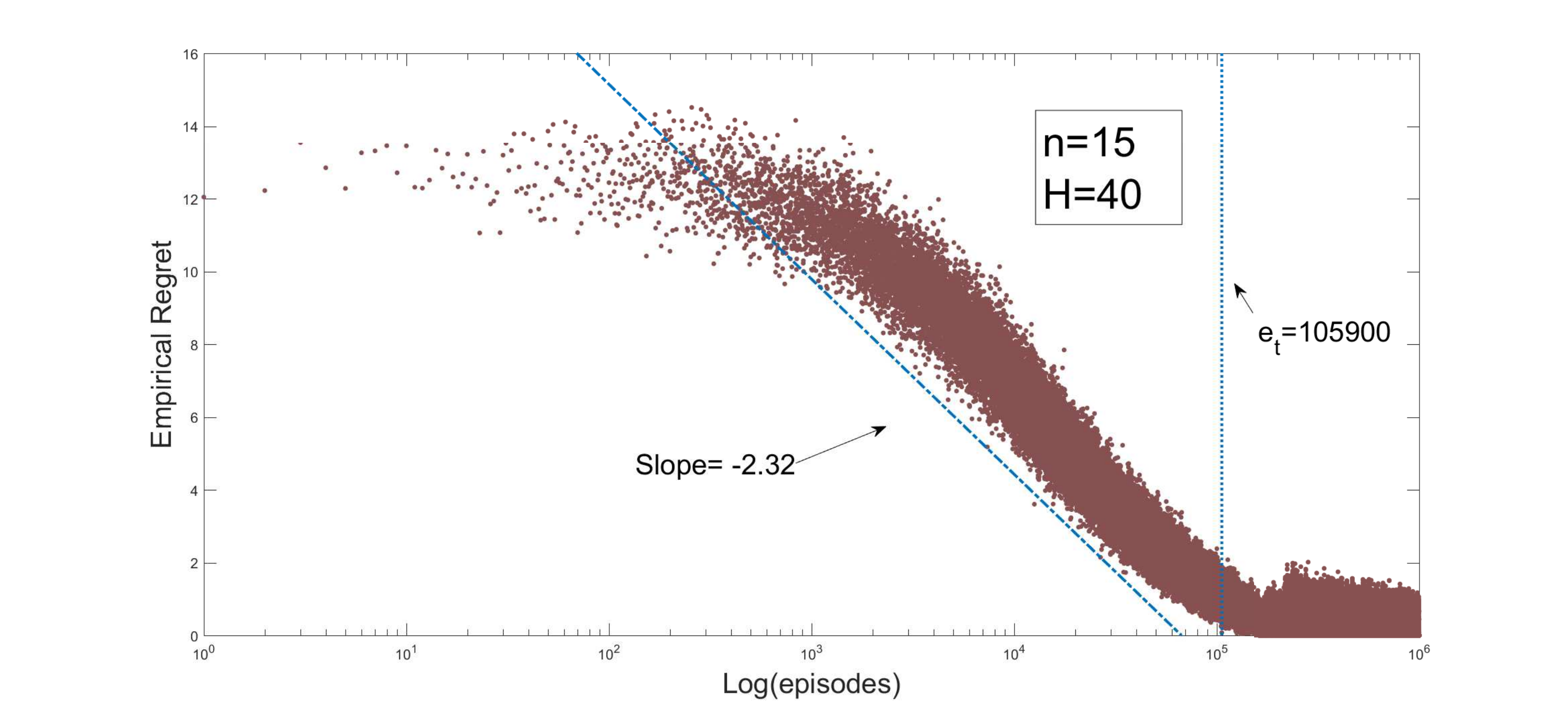}}\\
\subfloat[]{\includegraphics[width = 3 in]{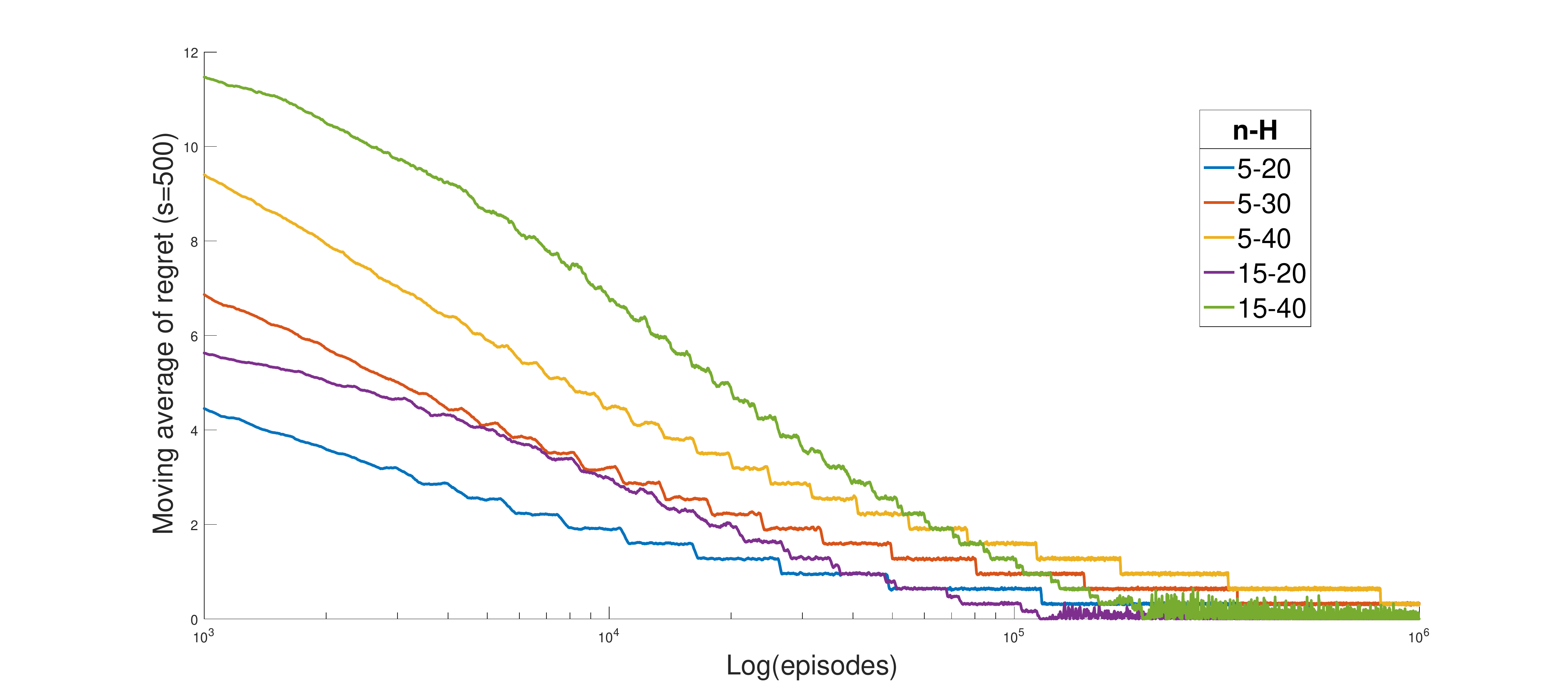}}
\caption{Plots  (d) and (e) show the regret for different settings in the one-dimensional case.
Plot (f) is obtained by taking the moving averages of regret for the same case. All the plots are log-scaled along the episode-axis.}
\label{fig:regret-1d}
\end{figure}



Next we conducted experiments on a two-dimensional problem with $A=2$ and a Lipschitz continuous reward function for the respective actions.
The reward functions were $\sin(\frac{x-y}{\sqrt{2}})+\cos(\frac{x+y}{\sqrt{2}})$ and 
$\sin(\frac{y-x}{\sqrt{2}})-\cos(\frac{x+y}{\sqrt{2}})$ for actions $a_1$ and $a_2$ respectively.
The analysis of this case is also similar to the one-dimensional setting. 
The algorithm shows similar trend for different values of $H$ for same number of intervals $n$ (see figure \ref{fig:regret-2d})
i.e, regret approaches zero slower for higher values of $H$ as in one-dimensional setting.


\section{Conclusion}
We considered the finite-horizon continuous reinforcement learning problem. We have given an algorithm based on UCBVI for the same. With the only
assumption that the reward function and transition probabilities are Lipscitz continuous we show that the upper bound on the discretization error is $const.Ln^{-\alpha}T$. We have shown the matching lower bound under the assumption that the algorithm discretizes the state-space of MDP.
In future we would like to show similar bound without this assumption. Also, with an additional assumption that 
the sampling is unbiased, we proved that the regret is of order $O(T^{2/3})$, when using our algorithm.

We have also given some experimental results to validate our propositions. In future we
would like to extend the algorithm to infinite-horizon case. This seems to be difficult as the UCBVI algorithm needs the horizon length $H$ as input.
We also want to improve the dependence on size of action $A$, Lipschitz constant $L$ 
and horizon length $H$ on the regret.
\bibliographystyle{ieeetr}
\bibliography{contl-lower}

\end{document}